\documentclass{article}

\PassOptionsToPackage{numbers, compress}{natbib}
\makeatletter
\def\input@path{{Formatting_Instructions_For_NeurIPS_2026/}}
\makeatother
\usepackage[preprint]{neurips_2026}
\usepackage[utf8]{inputenc}
\usepackage[T1]{fontenc}
\usepackage{hyperref}
\usepackage{url}
\usepackage{booktabs}
\usepackage{amsmath, amsthm, amssymb, amsfonts, mathtools, bbm}
\usepackage{nicefrac}
\usepackage{microtype}
\usepackage{graphicx}
\usepackage{subcaption}
\usepackage[table]{xcolor}
\usepackage{wrapfig}
\usepackage{algorithm}
\usepackage{algorithmic}
\usepackage{multirow}
\usepackage{enumitem}
\usepackage{placeins}

\def\keywordname{{\bfseries \emph Keywords}}
\def\keywords#1{\par\addvspace\medskipamount{\rightskip=0pt plus1cm
\def\and{\ifhmode\unskip\nobreak\fi\ $\cdot$
}\noindent\keywordname\enspace\ignorespaces#1\par}}

\newcommand{\conv}[1]{\operatorname{ConvHull}(#1)}
\newcommand{\flip}[2]{\operatorname{Flip}(#1; #2)}
\newcommand{\poly}{\mathcal{P}}
\newcommand{\vpoly}{\mathcal{V}}

\newcommand{\tri}{\triangle}
\newcommand{\trispace}[1]{\mathcal{T}(#1)}
\newcommand{\flipspace}[1]{F(#1)}
\newcommand{\simplex}{\sigma}
\newcommand{\circuit}{Z}

\newcommand{\modelvalue}{V_{\psi}}
\newcommand{\modelactor}{\pi_{\theta}}
\newcommand{\modelencoder}{\mathrm{Enc}_{\kappa}}
\newcommand{\valueparam}{\psi}
\newcommand{\actorparam}{\theta}
\newcommand{\encoderparam}{\kappa}

\usepackage{xspace}
\newcommand{\algoname}{TriSearch\xspace}

\DeclareMathOperator*{\argmin}{argmin}
\DeclareMathOperator*{\argmax}{argmax}

\DeclareMathOperator{\Lk}{Lk}

\newtheorem{theorem}{Theorem}[section]
\newtheorem{proposition}[theorem]{Proposition}

\newtheorem{definition}[theorem]{Definition}
\newtheorem{problem}[theorem]{Problem}

\title{
TriSearch: Learning to Optimize Triangulations via Bistellar Flips} 

\author{
  Yiran Wang \\
  UCLA \\
  yiranwang1027@ucla.edu \\
  \And
  Guido Mont\'ufar \\
  UCLA \& MPI MiS \\
  montufar@math.ucla.edu
}

\begin{document}
\maketitle

\begin{abstract}

We introduce TriSearch, a reinforcement-learning framework for optimizing objectives over triangulations of a polytope via bistellar flips. The key idea is a circuit-supported subtriangulation action representation: feasible flips are encoded by their supporting circuit and realized local subtriangulation, enabling a learned policy to rank them using local geometric and combinatorial features. This yields a dimension-agnostic interface and enables efficient traversal of the flip graph without explicit enumeration of the full triangulation space. Instantiated in 3D and 4D, TriSearch generalizes zero-shot from small training instances to larger polytopes with exponentially larger search spaces. It achieves top performance on metric objectives in 3D and, in 4D, discovers more distinct Fine, Regular, Star triangulations of reflexive polytopes, corresponding to Calabi-Yau threefolds, than existing samplers under a fixed budget. 

\end{abstract}

\section{Introduction} 

A triangulation of a $d$-dimensional polytope
is a decomposition
into $d$-dimensional simplices that meet only along shared faces. 
Triangulations are fundamental objects in computational and combinatorial geometry. 
They underpin a wide range of downstream pipelines: in mesh generation, finite element analysis, and computer graphics, triangulations discretize a continuous geometric domain so that numerical solvers and rendering algorithms can operate over a finite set of simplices~\cite{frey2007mesh,bern1995mesh,shewchuk2002delaunay}. 
In mathematical physics, triangulations of 4D reflexive polytopes parametrize large families of Calabi-Yau threefolds~\cite{batyrev1993dual,kreuzer2000complete}. 

Different applications value different properties of a triangulation, so the practical task is
to find one that is \emph{good} for a chosen objective. 
Mesh generation and finite element analysis prioritize element shape and count to control solver cost and numerical accuracy~\cite{bern1995mesh,shewchuk2002delaunay}, whereas computational geometry studies combinatorial and geometric objectives such as minimizing total edge length and minimizing the number of simplices~\cite{edelsbrunner1988tetrahedrizing,below2004complexity,mulzer2008minimum}. 
Constructing a Calabi-Yau threefold requires finding a \emph{fine, regular, star} triangulation (FRST) of a 4D reflexive polytope and applying Batyrev's toric construction to map it to a corresponding threefold~\cite{batyrev1993dual}. Within this constrained family, the practical goal is to discover as many distinct topological classes as possible~\cite{kreuzer2000complete,demirtas2022cytools}. 
Each of these objectives selects a different region of the same combinatorial space, motivating a unified search framework that can target any of them.

For a $d$-dimensional polytope $\poly$, triangulation optimization is generally hard, especially for $d > 2$. 
This difficulty stems from the cardinality of the triangulation space $\trispace{\poly}$, which 
can grow exponentially in both the ambient dimension $d$ and the number of vertices $|\vpoly(\poly)|$, with a standard worst-case upper bound of $2^{O(|\vpoly(\poly)|^{\lceil d/2 \rceil})}$~\cite{dey1993counting}. 
Within this exponentially large search space, only a few special cases are known to admit polynomial-time algorithms~\cite{rajan1991optimality}. 
In contrast, many natural objectives are computationally intractable: for example, 
deciding whether a triangulation minimizes the number of simplices for $d>2$, or the total edge length for $d\geq2$, is NP-hard~\cite{edelsbrunner1988tetrahedrizing,below2004complexity,mulzer2008minimum}. 

The triangulation space $\trispace{\poly}$ has a natural local structure given by \emph{bistellar flips}. 
A circuit is a minimal affinely dependent set of vertices that supports two local triangulations. A bistellar flip replaces the realized local triangulation in the current triangulation with the other one. 
These moves organize
triangulations into a flip graph $\mathcal{G}(\poly)$, whose nodes are triangulations and whose edges correspond to single flips. 
At any given state, only circuits whose realized subtriangulation is contained in the current triangulation are flippable, so the set of feasible moves is local and changes after each move. 
This state-dependent, geometry-constrained action space makes standard neural combinatorial optimization methods difficult to apply directly. 

This observation leads to our formulation of the triangulation problem, and the design of \algoname{}. 
We identify triangulation optimization as search on an implicit flip graph, where states are complete triangulations and actions are currently feasible bistellar flips. 
We then reformulate the problem as reinforcement learning on this graph, but do not ask the policy to learn geometric validity from data. 
Instead, we use a geometric routine~\cite{rambau2002topcom} to expose the valid local moves, and the learned policy decides which move is useful for long-range optimization. 
This formulation requires an architecture that can compare a state-dependent set of graph actions. 
We address this by representing each feasible action through the circuit supporting the flip and the realized local subtriangulation structure. 
This encodes both a certificate of admissibility and the local replacement performed by the move. 
Since circuits are defined intrinsically in any fixed dimension, the same action representation can handle different flip arities and objectives, and applies to both optimization and FRST discovery. Changing the task modifies only the reward, not the search interface.

\paragraph{Main Contributions.} 
We summarize our contributions as follows:
\begin{itemize}[leftmargin=*]
    
    \item We identify triangulation optimization as reinforcement learning on an implicit flip graph, and introduce a learned search framework for this formulation. Geometric routines enumerate valid local moves, while a circuit-supported policy architecture scores which move is promising for long-range search.
    
    \item Empirically, \algoname{} shows strong performance in triangulation optimization under a fixed flip budget. 
    It achieves the lowest aggregate relative gap among evaluated search algorithms in both $3$D and $4$D. 
    Policies trained on smaller polytopes generalize to larger unseen instances, demonstrating that the learned local search generalizes beyond the hardness regime during training.
    
    \item In the Calabi-Yau (CY) setting, we show that \algoname{} has real application-level potential beyond benchmark optimization. The same framework becomes a practical FRST sampler and outperforms the existing \textsc{CYTools} samplers under the same budget by discovering more distinct CY threefold classes.
\end{itemize}

\section{Related Work}

\textbf{Deep Learning for Combinatorial Optimization.} 
A large body of work leverages deep learning and reinforcement learning to address combinatorial optimization problems~\cite{bengio2021machine, mazyavkina2021reinforcement,cappart2023combinatorial}. 
Prior work mostly focused on routing, scheduling, or graph-optimization settings. 
In these settings, states and actions have standard representations. 
Triangulation optimization is an important problem, but it has remained under-studied by the machine learning community. 
Constructive approaches build solutions incrementally, with
representatives including pointer networks, attention-based policies, and policy-gradient models for routing~\cite{vinyals2015pointer,bello2016neural,kool2018attention,nazari2018reinforcement,kwon2020pomo}. 
This paradigm is effective when partial solutions are meaningful and feasibility can be checked or enforced locally.
Triangulation optimization does not naturally fit this setting:
once a $d$-simplex is placed inside $\poly$, the remaining region is generally non-convex in dimension above $2$, 
and deciding whether such a region admits a triangulation is itself
NP-hard~\cite{ruppert1992difficulty}. 
Thus a partial list of simplices is not a reliable state for learning, and the validity of the final triangulation remains a global constraint. 

Another line of work focuses on learning local-improvement rules for complete solutions, such as routing tours, local rewrites, or Boolean assignments~\cite{wu2021improvement,chen2019neurewriter,barrett2020ecodqn}. 
These methods are designed around problem-specific actions for TSP, MaxCut, CVRP, and JSSP~\cite{wu2021improvement,chen2019neurewriter,barrett2020ecodqn,falkner2022neurols}. 
Local-search controllers such as NeuroLS~\cite{falkner2022neurols} operate at a meta level by learning acceptance criteria, operator choice, or perturbation on top of predefined local-search procedures, rather than scoring primitive moves themselves. 
In triangulation optimization, the primitive actions are bistellar flips, whose admissibility depends on the currently realized circuit subtriangulation. 
The policy therefore cannot act on a fixed set of nodes, edges, variables, or tokens. 
Instead, we use a geometric routine~\cite{rambau2002topcom} to enumerate valid flips, 
and train the policy to select among
them based on their circuit-supported local subtriangulation. 

\textbf{Triangulation Optimization and FRST Sampling.} 
Triangulations, circuits, regular triangulations, and secondary polytopes have a well-developed theory~\cite{de2010triangulations,gelfand1994discriminants}, with bistellar flips serving as the standard local moves between triangulations~\cite{lawson1972transforming,pachner1991pl}. 
Optimizing over this space is challenging, as reflected in hardness results 
for minimum-size triangulations in dimension three and for minimum-weight triangulation~\cite{below2004complexity,mulzer2008minimum}. 
Unlike many combinatorial-optimization benchmarks, this setting lacks a natural formulation for general-purpose solvers such as OR-Tools, MOSEK, or Gurobi. 
Instead, triangulation-specific software such as \textsc{TOPCOM} is used for exact traversal of the flip graph~\cite{rambau2002topcom}.
In the CY setting, Batyrev's construction and the Kreuzer-Skarke classification reduce a large computational pipeline to sampling fine, regular, and star triangulations (FRSTs) of 4D reflexive polytopes~\cite{batyrev1993dual,kreuzer2000complete}. 
\textsc{CYTools} is a widely used computational and sampling tool for this pipeline~\cite{demirtas2022cytools}. 
Concurrent work, CYTransformer~\cite{yip2025transforming}, adopts a constructive approach, training an encoder-decoder transformer to generate candidate triangulations one token at a time while relying on \textsc{CYTools} to verify whether the output is an FRST. 
While this yields a useful learned generator, it inherits the structural difficulty noted above: generated sequences are not guaranteed to be valid triangulations or FRSTs. 
In the largest reported setting, with Hodge number $h^{1,1}=10$, the FRST generation rate remains below $50\%$. 
In contrast, \algoname does not generate full triangulations and then reject invalid outputs. 
Instead, it starts from a valid triangulation and navigates the flip graph via certified bistellar flips,
ensuring that every move remains within the valid triangulation space by construction. 

\textbf{Geometric Deep Learning.} 
Geometric deep learning builds neural models for coordinate-based data, such as point clouds and meshes, while respecting underlying symmetries~\cite{bronstein2021geometric, satorras2021n,liao2022equiformer,liao2023equiformerv2}. 
Graph neural networks (GNNs) are widely used to represent instances in combinatorial optimization~\cite{cappart2023combinatorial}, but most
operate on
pairwise graphs and do not fully exploit the underlying geometric structure. 
Complementary higher-order approaches, such as hypergraph and simplicial neural networks~\cite{feng2019hypergraph,ebli2020simplicial,wu2023simplicial}, model interactions beyond pairwise edges via faces and incidence relations. 
Triangulation optimization requires geometric encoding and higher-order local information, but the central issue is the action representation. 
Each action corresponds to a flippable circuit, whose realized subtriangulation carries combinatorial and geometric information. 
\algoname reflects this structure through a global equivariant graph neural network (EGNN) encoder over the current triangulation and an actor that scores circuit-supported subtriangulations.

\section{Problem Formulation and Preliminaries} 

\subsection{Optimal Triangulations} 

A $d$-polytope $\poly$ with
vertex set $\vpoly(\poly) = \{p_1, \dots, p_n\} \subset \mathbb R^d$ is the convex hull of the vertices, $\poly = \conv{\{p_1, \dots, p_n\}} \subset \mathbb{R}^d$. 
We abbreviate the notation of vertex set to $\vpoly$ whenever the context is clear. 
A $d$-simplex $\simplex$ is a $d$-polytope with $(d+1)$ extreme points, e.g., a $2$-simplex is a triangle and a $3$-simplex is a tetrahedron. 
A triangulation is a collection 
of full-dimensional simplices $\tri = \{\simplex_1, \dots, \simplex_k\}$ that decompose a polytope:  
\begin{definition}[Triangulation] \label{def:triangulation}
Let $\poly \subset \mathbb{R}^d$ be a $d$-polytope with vertex set $\vpoly(\poly)$.
A \emph{triangulation} $\tri = \{\simplex_1, \dots, \simplex_k\}$ of $\poly$ is a decomposition of $\poly$ into $d$-simplices with the following properties: 
\vspace{-0.7em}
\begin{itemize}
  \item (Convex Hull Union) the union of all $d$-simplices equals $\poly$: $\cup_{i=1}^k \simplex_i = \poly$.
  \vspace{-0.6em}
  \item (Intersection) the intersection of any two $d$-simplices is either empty or a shared face.
  \vspace{-0.6em}
  \item (Vertex Union) the vertices of the simplices are in $\vpoly(\poly)$: $\cup_{i=1}^k \vpoly(\simplex_i) \subseteq \vpoly(\poly)$. 
\end{itemize}
\end{definition}
A triangulation is \emph{fine} when the \emph{Vertex Union} property holds with equality, i.e., $\cup_{i=1}^k \vpoly(\simplex_i) = \vpoly(\poly)$, meaning it uses all vertices of the polytope. 
This condition is automatically met when $\vpoly(\poly)$ is the set of extreme points of the convex hull, but
non-trivial in
settings such as
FRST sampling,
where the polytopes are \emph{reflexive polytopes} containing interior vertices. 
A triangulation $\tri$ is \emph{regular} if there exists a set of heights, such that we can lift the vertex set one dimension higher to $\mathbb{R}^{d+1}$, and projecting the lower envelope of the convex hull back to $\mathbb{R}^{d}$ recovers the simplices. 

For a fixed polytope $\poly$, let $\trispace{\poly}$ denote the space of all triangulations of $\poly$. 
The central optimization problem in this paper is to search this space for a triangulation that is best for a given objective. 

\begin{problem}[Optimal Triangulation] \label{def:optimal-tri}
In a fixed dimension $d$, given a polytope $\poly$ and an objective function $f: \trispace{\poly} \rightarrow \mathbb{R}$, an optimal triangulation is defined as $\tri^\star \in \argmin_{\tri \in \trispace{\poly}} f(\tri)$.
\end{problem}

\subsection{Navigating Triangulation Spaces via Flip Graphs} 

We will navigate the space of triangulations by local moves between triangulations implemented by local modifications defined as follows. 
Given a polytope $\poly$, a circuit $\circuit$ is a minimal affinely dependent subset of $\vpoly(\poly)$. 
For any circuit $\circuit$, the affine dependence has positive and negative parts by Radon partition~\cite{radon1921mengen}, which induce two unique triangulations of $\conv{\circuit}$, denoted by $\tri^+(\circuit)$ and $\tri^-(\circuit)$. 
A circuit $\circuit$ is \emph{flippable} in $\tri$ if one of these two local triangulations is realized in $\tri$. 
\begin{definition}[Flippable Circuit] \label{def:flippability}
Let $\tri$ be a triangulation of a $d$-polytope $\poly$.
A circuit $\circuit \subset \vpoly(\poly)$ is \emph{flippable} in $\tri$ if either $\tri^+(\circuit) \subseteq \tri$ or $\tri^-(\circuit) \subseteq \tri$.
\end{definition}
By convention, we denote by $\tri^+(\circuit)$ the realized local subtriangulation in $\tri$, and by $\tri^-(\circuit)$ its replacement. 
A \emph{bistellar flip} replaces the realized side by the other side as defined in Definition~\ref{def:flip}, with an illustrative example in $3$D in Figure~\ref{fig:bistellar}. 
\begin{definition}[Bistellar Flip] \label{def:flip}
Let $\circuit$ be a flippable circuit in $\tri$, with the labels oriented so that $\tri^+(\circuit) \subseteq \tri$.
The \emph{bistellar flip} on $\circuit$ replaces $\tri^+(\circuit)$ by $\tri^-(\circuit)$, producing a new triangulation $\tri' := \left( \tri \setminus \tri^+(\circuit) \right) \cup \tri^-(\circuit)$.
\end{definition}
When $\conv{\circuit}$ has dimension below $d$, the circuit gives the core of the move.
Given the current triangulation $\tri$ and a flippable circuit $\circuit$, the surrounding triangulation uniquely determines the common link $L_{\tri}(\circuit)$.
We therefore use $\circuit \in \flipspace{\tri}$ to denote the full circuit-supported flip action, with this link suppressed.
The notation $\tri^+(\circuit)$ denotes the full-dimensional local subtriangulation removed from $\tri$, and $\tri^-(\circuit)$ denotes the full-dimensional replacement.
This realized subtriangulation fixes the direction of the bistellar flip, since the same local move can be traversed in both directions.
Appendix~\ref{app:radon-circuits} unpacks the linked construction and the proof of the uniqueness.

Through bistellar flips, the triangulation space becomes a flip graph. 
The flip graph $\mathcal{G}(\poly)$ is a bidirectional graph, whose node set is the set of all triangulations $\trispace{\poly}$, and the edge set is all the bistellar flips. 
For a triangulation $\tri$, we use $\flipspace{\tri}$ to denote the set of flippable circuits, each understood as its full circuit-supported flip action.
Its one-step neighborhood is $\mathcal{N}(\tri)= \{\flip{\tri}{\circuit} \mid \circuit \in \flipspace{\tri}\}$, where $\flip{\tri}{\circuit}$ is the triangulation obtained by applying the flip supported by $\circuit$. However, the flip graph does not need to be fully-connected when $d > 2$, there are known examples with disconnected flip graph in 6D~\cite{santos2005non}. 

\subsection{Triangulation Optimization via Reinforcement Learning}

Finding an exact solution for a generic objective usually requires enumerating the whole triangulation space. 
This is infeasible beyond small instances because $\trispace{\poly}$ grows exponentially with $|\vpoly(\poly)|$. 
Instead of treating this space as an unstructured set, we leverage its local graph structure, and reformulate the problem into a reinforcement learning problem. 

\begin{wrapfigure}[10]{r}{0.4\textwidth}
    \centering
    \includegraphics[width=0.9\linewidth]{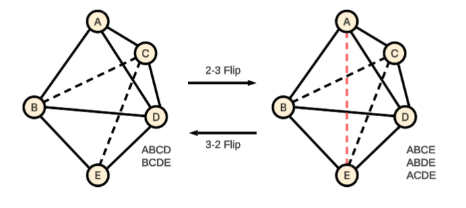}
    \caption{An illustration of bistellar flips in $3$D.}
    \label{fig:bistellar}
\end{wrapfigure}

The optimal triangulation problem can
be viewed as search over $\mathcal{G}(\poly)$. 
Exact enumeration algorithms are reliable on small instances, but they scale poorly with the size of the flip graph.
We formulate the search over the triangulation graph as a Markov Decision Process (MDP)
$
    \mathcal{M}=\{\trispace{\poly}, \flipspace{\cdot}, R_f, H, \gamma\}.
$
The state space $\trispace{\poly}$ is the vertex set of $\mathcal{G}(\poly)$, and the action space at a state $\tri$ is the state-dependent set of flippable circuits $\flipspace{\tri}$ under the convention above.
\algoname~does not materialize $\mathcal{G}(\poly)$, so this MDP is a conceptual description of the search problem.
At each visited triangulation, geometric routines enumerate only the currently feasible outgoing flips $\flipspace{\tri}$, and the policy chooses among those locally visible actions.

The reward function $R_f$ is defined by the improvement in the objective induced by a flip. 
For a minimization objective, we use
$
    R_f(\tri, \circuit) = f(\tri) - f(\flip{\tri}{\circuit}),
    \ \circuit \in \flipspace{\tri}
$,
, the sign is reversed for maximization case. 
The horizon $H$ specifies the number of flips
per episode, and $\gamma \in (0,1)$ is the discount factor. 
Equivalently under the circuit-supported action notation, given $\tri$ and a flippable circuit $\circuit$, the next state is uniquely determined as $\flip{\tri}{\circuit}$.
The realized subtriangulation $\tri^+(\circuit)$ specifies which direction of the bidirectional flip is applied.

The reinforcement learning objective is to learn a policy $\pi_f$ that chooses flips at each state to maximize the expected discounted cumulative reward over the horizon $H$:
\begin{equation}
  \mathcal{J}(\pi) := \mathbb{E} \left[\sum_{t=0}^{H-1} \gamma^t r(\tri_t,\circuit_t) \right], \quad
  \pi^\star = \argmax_\pi \mathcal{J}(\pi).
\end{equation}
The resulting policy is used to navigate the flip graph by selecting feasible bistellar flips from the current triangulation, with the aim of finding candidate triangulations that optimize the objective. 

\section{TriSearch}

\subsection{Policy}

\begin{figure}[t]
    \centering
    \includegraphics[width=\textwidth]{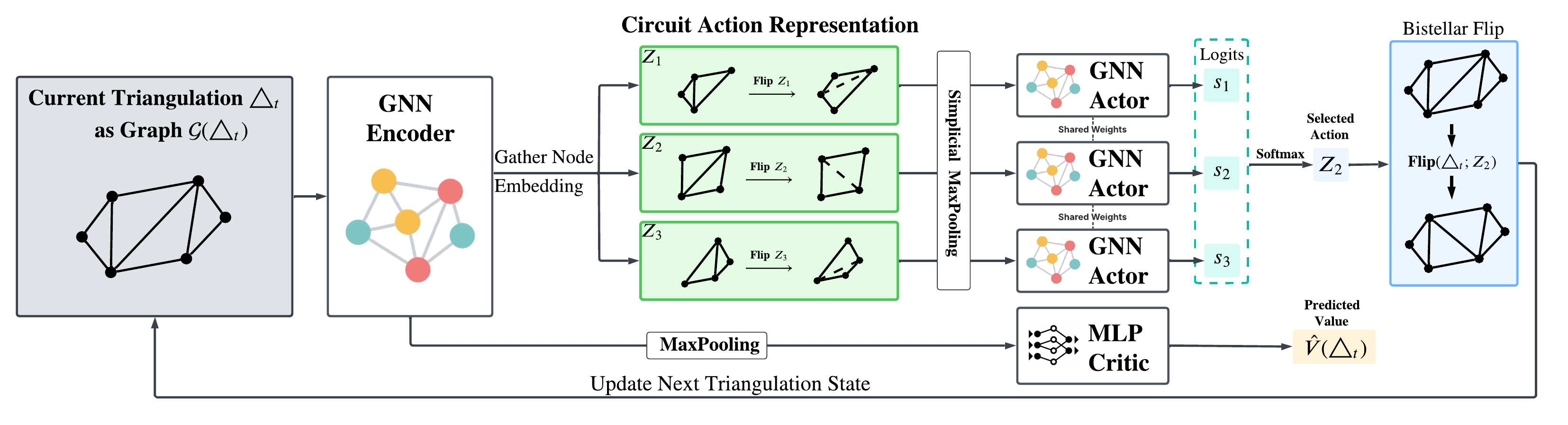}
    \caption{Architecture and training loop of \algoname. The shared encoder $\modelencoder$ maps the triangulation graph to node-level embeddings. 
    The value head $\modelvalue$ pools these into a scalar estimate of the state value, while the actor scores each circuit that supports a flip $Z_i \in \flipspace{\tri}$ using a simplex-level GNN and selects a flip via softmax.}
    \label{fig:architecture}
\end{figure}
\vspace{-0.5em}

In this section, we present our policy architecture and optimization method for triangulation optimization.
At a state $\tri$, the policy must select a circuit from the variable-size action set $\flipspace{\tri}$ while conditioning on the full triangulation.
We therefore propose a two-stage architecture illustrated in Figure~\ref{fig:architecture}. 
A shared encoder $\modelencoder$, shared by both actor and value networks, processes the entire triangulation and produces vertex-level representations, 
providing each vertex with global geometric context. 
The actor $\modelactor$ then lifts these vertex embeddings to maximal simplices, propagates information over the $d$-simplicial graph,
and outputs a score for each flippable circuit.
A value head $\modelvalue$ pools the embeddings and uses an MLP to estimate the state value for policy gradient training. 

For a triangulation $\tri$, let $G(\tri) = (\vpoly(\poly), \mathcal{E}(\tri))$ denote the graph whose nodes are the vertices of the polytope and whose edges are the $1$-skeleton of $\tri$. 
For a flippable circuit $Z \in \flipspace{\tri}$,
we let $\tri^+(Z)$ be the realized local subtriangulation contained in $\tri$, and let $\tri^-(Z)$ be its replacement.

\textbf{Encoder.} 
The triangulation problem is invariant under rigid motions of the
polytope and does not depend on the
ordering of the vertices. 
To respect these symmetries, the
encoder employs an EGNN~\cite{satorras2021n} to the full graph $G(\tri)$. 
For each vertex $p_i \in \vpoly(\poly)$, the coordinates serve as the geometric input and a learnable projection initializes the hidden features. After $L_{\mathrm{Enc}}$ layers, the encoder outputs vertex embeddings $\mathbf{h}^\star$ and updated coordinates $\mathbf{x}^\star$. 

\textbf{Actor.}  
The actor
turns geometric context into a flip decision. The shared encoder works at the vertex level because relative coordinates relations such as affine dependence naturally live on $\vpoly(\poly)$. 
However, a feasible flip is not decided by one vertex or one edge. Instead, it replaces the realized local subtriangulation $\tri^+(Z)$ by $\tri^-(Z)$. So the useful action context is carried by $d$-simplices
and
their adjacency relations. For this reason, we design the actor to leverage the geometric information extracted by the encoder to build features on maximal simplices and then pooling the features over $\tri^+(Z)$ for each feasible circuit and its realized subtriangulation.

Given the encoder embeddings $\mathbf{h}^\star$, 
the actor lifts the vertex embeddings to simplex level for each $d$-simplex $\simplex_a \in \tri$ by performing pooling over its vertices: 
\begin{equation}
  \mathbf{g}_{a}^{(0)}
  =
  \mathrm{MaxPool}\left(
    \{\mathbf{h}_i^\star : p_i \in \vpoly(\simplex_a)\}
  \right).
\end{equation}
The actor
operates on
the \emph{dual graph} $\hat{\mathcal{G}}(\tri)$ of the triangulation, where nodes are the $d$-simplices and two nodes are adjacent if the corresponding simplices share a $(d-1)$-face. 
Simplicial convolution layers~\cite{ebli2020simplicial} are
used to refine the features along this graph. 
Simplex embeddings $\mathbf{g}_{a}^{\star}$ are produced for each $d$-simplex $\simplex_a \in \tri$ after $L_{\mathrm{\pi}}$ layers.
This propagation lets the score for a flip depend on the local arrangement around $\tri^+(Z)$, rather than only on the raw vertices in its circuit.  
Finally, for a feasible circuit $Z \in \flipspace{\tri}$, the actor pools the refined simplex features over the realized local subtriangulation $\tri^+(Z)$ and projects the result to one logit using a linear projection with learnable matrix $W$:
\begin{equation}
  \mathbf{z}_Z =
  \mathrm{MaxPool}\left(
    \{\mathbf{g}_{a}^{(L_{\mathrm{\pi}})} : \simplex_a \in \tri^+(Z)\}
  \right),
  \qquad
  s_{\actorparam}(\tri, Z) = W^{\top} \mathbf{z}_Z .
\end{equation}
The policy is the categorical distribution over the currently flippable circuits:
\begin{equation*}
  \modelactor(Z \mid \tri) = \frac{\exp ( s_{\actorparam}(\tri, Z) )}{\sum_{Z' \in \flipspace{\tri}} \exp ( s_{\actorparam}(\tri, Z'))} .
\end{equation*}
The resulting distribution compares only currently flippable circuits, which matches the implicit flip-graph setting. The encoder supplies reusable geometric features for the whole triangulation, and the actor specializes them to the local simplicial structure that determines each action. The ablation in Table~\ref{tab:ablation-controlled} supports this choice. Removing simplicial information substantially worsens the controlled $4$D results across all three objectives.

\textbf{Value Function.}  
We also use a light-weight value model $\modelvalue$, consisting of a pooling layer followed by an MLP $\phi_V$. 
It takes the encoder node embeddings $\mathbf{h}^\star$, aggregates them into a global state representation, and outputs a scalar value estimate: 
\begin{equation}
  \modelvalue(\tri) = \phi_V(\mathrm{MaxPool}(\mathbf{h}^\star)).
\end{equation}

\subsection{Policy Optimization}
Our goal is to have a policy to perform optimization across polytopes with different combinatorial and geometric configurations. To this end, we train the policy over a distribution of polytopes to learn the reusable navigation behavior that can generalize beyond the training data. Each polytope induces its own Markov Decision Process (MDP) and the parameters of encoder, actor and value function are shared across these MDPs. For each training polytope, we maintain a small set of its triangulations $\hat{V} \subseteq \trispace{\poly}$ as initial state. In each episode, we initialize our policy at one of the initial triangulation from $\hat{V}$ and generate rollouts. To prevent the model from overfitting and getting stuck at suboptimal trajectories, we add a count-based expansion bonus $b_t = \beta N(\tri_{t+1})^{-1/2}$, where $N(\tri)$ is the visitation count of triangulation state $\tri$ in the discovered training graph for $\poly$, initialized to one.
It encourages the agent to enlarge the observed portion of the implicit graph during training, as well as exploring new optimization strategies. 

\textbf{Weighted Initial State Sampling.} For training efficiency, we want to prevent \algoname{} from repetitively sampling from a well-explored region~\cite{ecoffet2019go}, which would gather biased samples from a concentrated region of the graph, causing the policy to get stuck in sub-optimality. Hence, we further apply $N(\tri_{0})^{-1/2}$ to a weighted initial state sampling: the initial states with higher bonus represents under-explored area, which will be sampled with higher weights.

\textbf{Training.} 
We train the encoder, actor, and value head jointly with PPO \cite{schulman2017proximal} using generalized advantage estimation (GAE)~\cite{schulman2015high}. 
In each iteration, the current policy collects $B$ rollout trajectories with transitions $(\tri_t, Z_t, \tilde r_t, \tri_{t+1})$, 
where $\tilde r_t = r_t + b_t$ includes the expansion bonus. We use the standard clipped policy loss and a squared value loss on $\modelvalue(\tri_t)$ against the empirical return. Advantages are estimated by GAE with smoothing parameter $\lambda$. The full algorithm is given in Algorithm~\ref{alg:train}, and the details of the PPO can be found in Appendix~\ref{app:training}.

\begin{algorithm}[tb]
  \caption{\algoname Training} \label{alg:train}
  \begin{algorithmic}
    \STATE {\bfseries Input:} Training polytopes $\mathcal{D}$, Seed Triangulations $\hat{V}_{\mathcal{D}} = \cup_{\poly \in \mathcal D}\hat{V}_{\poly}$, Objective $f$, Horizon $H$, Bonus coefficient $\beta$, Number of parallel environments $N$
    \STATE {\bfseries Initialize:} Model Parameters $(\encoderparam,\actorparam,\valueparam)$, and Visitation counts $N_{\poly}(\cdot) = 1$
    \FOR{training iteration $j = 1,2,\dots,J$}
      \STATE Sample $N$ initial triangulations: $\tri_0^{(i)} \sim \hat{V}_{\mathcal{D}}$ proportional to weight $N_{\poly}(\tri_0^{(i)})^{-1/2}$
      \STATE Initialize rollout buffer: $\mathcal{B} \gets \emptyset$
      \FOR{each rollout $i$ and step $t=0,\dots,H-1$}
        \STATE Enumerate flippable circuits $\flipspace{\tri_t^{(i)}}$ with TOPCOM
        \STATE Apply $\modelencoder$ to $G(\tri_t^{(i)})$ to obtain global representation: $h^{(i),\star}_t \gets \modelencoder(G(\tri_t^{(i)}))$
        \STATE Pool $h^{(i),\star}_t$ on $d$-simplex level, apply $\modelactor$ to select circuit to flip: $Z_t^{(i)} \in \flipspace{\tri_t^{(i)}}$
        \STATE Obtain the value estimate $\modelvalue(h^{(i),\star}_t)$
        \STATE Flip the triangulation and obtain the next state: $\tri_{t+1}^{(i)} \gets \flip{\tri_t^{(i)}}{Z_t^{(i)}}$
        \STATE Compute the reward $r$ and store the transition $(\tri_t^{(i)}, Z_t^{(i)}, r_t^{(i)}, \tri_{t+1}^{(i)})$ to rollout buffer $\mathcal{B}$
      \ENDFOR
      \STATE Update $(\encoderparam,\actorparam,\valueparam)$ using PPO with rollout buffer $\mathcal{B}$
    \ENDFOR
  \end{algorithmic}
\end{algorithm}

\section{Experiments}
\label{sec:experiments}

We design our experiments around three questions about \algoname{}.
First, we ask whether the learned policies by \algoname{} can solve diverse triangulation objectives in both $3$D and $4$D.
Second, we ask whether the policy generalizes from small training polytopes to larger unseen ones, where the flip graph is exponentially larger.
Third, we ask whether the same framework supports a real downstream pipeline beyond benchmark optimization.
We address the first two questions in Section~\ref{sec:exp3d} through triangulation optimization against classical and learned local-search baselines.
We address the third question in Section~\ref{sec:exp4d} by reusing the training pipeline for FRST discovery on $4$D reflexive polytopes from the Kreuzer-Skarke list~\cite{kreuzer2000complete}, and then applying the trained model to Calabi-Yau threefold sampling against the \textsc{CYTools} samplers~\cite{demirtas2022cytools}.
Our experiments were run with a \textsc{RTX 2080} Ti GPU and Intel Xeon CPU.
Throughout the experiments, all MLPs and GNNs have 64 hidden units in each layer, with 3 layers in encoder and value function and two layers for GNN actor, use SiLU activation function~\cite{ramachandran2017searching}, and are trained using the Adam optimizer with a learning rate of 0.0001~\cite{kingma2014adam}.
The other detailed hyperparameters can be found in Appendix~\ref{app:hyperparameters}.

\begin{figure*}[t]
\centering
\makebox[\textwidth][c]{%
  \includegraphics[width=1.0\textwidth,height=0.92\textheight,keepaspectratio]{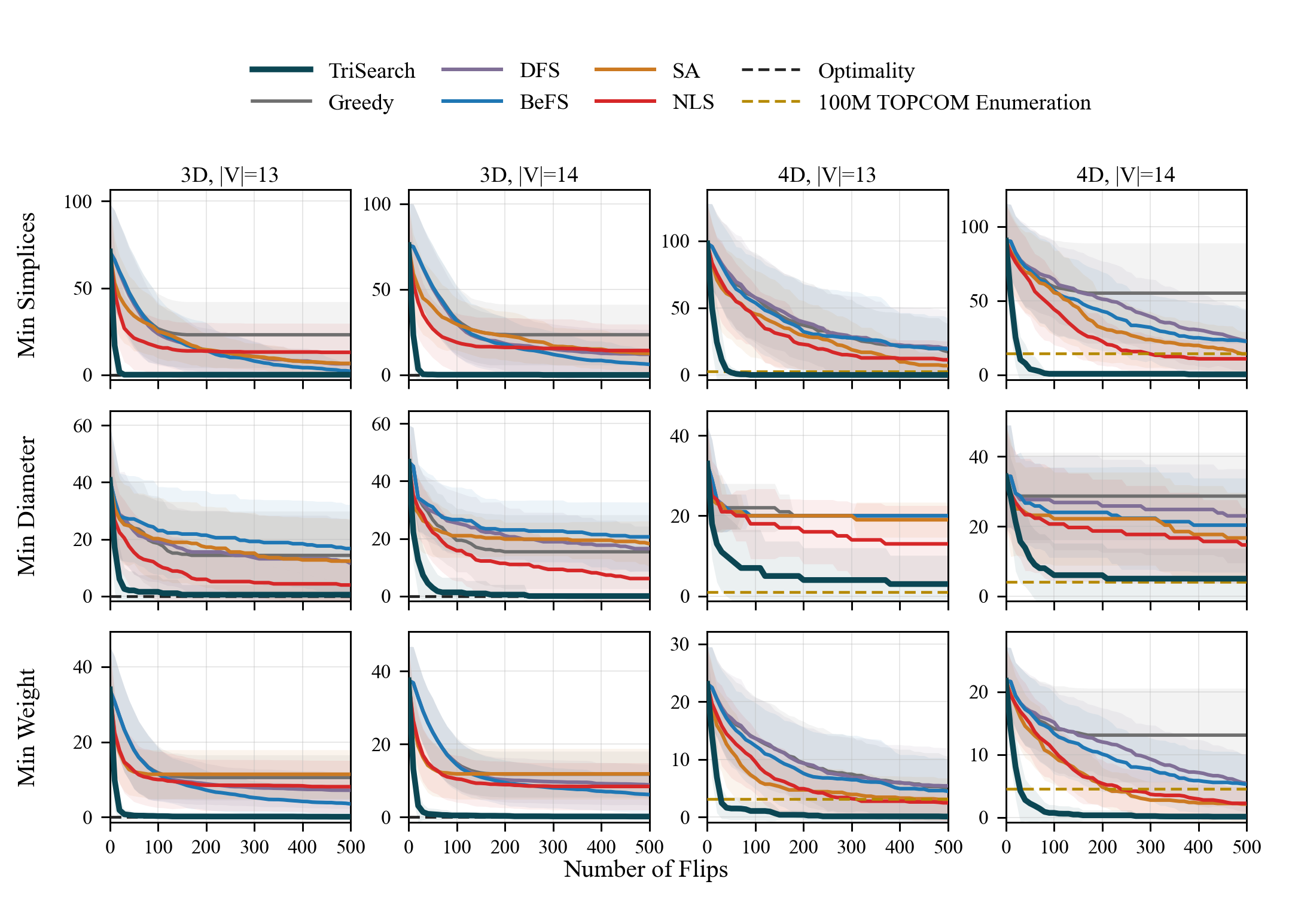}%
}
\vspace{-2em}
\caption{
  Triangulation optimization under a $500$-flip budget. 
  It shows the mean relative gap and standard deviation over evaluation polytopes. 
  The black dashed line is exact optimality in $3$D, and the gold dashed line is $10^9$ \textsc{TOPCOM} enumeration used as an external $4$D baseline. 
}
\label{fig:dense-optimization-500}
\end{figure*}

\subsection{Triangulation Optimization}
\label{sec:exp3d}
In this setting, we aim to evaluate the capacity of \algoname{} in triangulation optimization across various objectives, and verify its out-of-distribution generalization across polytope size.
For each dimension $d \in \{3, 4\}$, we sample $130$ training polytopes with $8$ to $11$ vertices and generate up to $2000$ seed triangulations per polytope with \textsc{TOPCOM}. The detailed data generation procedure is provided in Appendix~\ref{app:synthetic-data}.
Evaluation uses held-out polytopes with $13$ or $14$ vertices, where the flip graph is exponentially larger than what the policy saw at training time. 
For each dimension, we train one policy per objective for three objectives: \textbf{Min Simplices} (number of $d$-simplices), \textbf{Min Diameter} (diameter of the dual graph), and \textbf{Min Weight} (total edge length).
We compare against four classical heuristics-based baselines (\textsc{Greedy}, \textsc{DFS}, \textsc{BeFS}, \textsc{SA}), which select the flip based on objective improvements, and the learned meta-heuristics controller \textsc{NLS}~\cite{falkner2022neurols}, which learns a state-conditioned acceptance rule on random flip proposals.
For fair comparison, \textsc{NLS} shares the same EGNN architecture and the training hyperparameters as \algoname{}. The more detailed descriptions of the baseline algorithms are deferred to Appendix~\ref{app:baseline-details}. 
Each method receives a budget of $500$ flips from the same starting triangulations, and we report the average relative gap as:
\begin{equation*}
  \mathrm{RelativeGap}(A)
  =
  \frac{1}{|S|}
  \sum_{\poly \in S}
  \frac{f(\hat{\tri}_A(\poly)) - f(\tri_{\mathrm{ref}}(\poly))}
       {f(\tri_{\mathrm{ref}}(\poly))},
\end{equation*}
where $\hat{\tri}_A(\poly)$ is the best triangulation found by $A$ within the budget or from \textsc{TOPCOM} enumeration. 
In $3$D, the reference $\tri_{\mathrm{ref}}(\poly)$ is the exact optimum found by \emph{exhaustive} \textsc{TOPCOM} enumeration. In $4$D, an exhaustive enumeration is not feasible at this size, we perform a $10^9$-steps enumeration with \textsc{TOPCOM} as an additional baseline, and use the best triangulation found across all algorithms tested as the reference $\tri_{\mathrm{ref}}(\poly)$.

Figure~\ref{fig:dense-optimization-500} reports the optimization curves on the held-out polytopes. Notably, \algoname{} surpasses the $10^9$-steps \textsc{TOPCOM} reference on \textbf{Min Simplices} and \textbf{Min Weight} at both $|\vpoly|=13$ and $|\vpoly|=14$ within $500$ flips in $4$D.
\algoname{} largely closes the gap within around $100$ flips on every objective and reaches an average gap of $0.16\%$ in $3$D and $1.43\%$ in $4$D, against $8.97\%$ and $9.09\%$ for the next-best baseline \textsc{NLS}, where the detailed aggregated result tables can be found in Appendix~\ref{app:additional-experiments}. 

\textbf{Ablation study.} We ablate the actor on the shared $4$D evaluation splits with $|\vpoly|=13$ and $|\vpoly|=14$.
The full \algoname{} model uses an SNN actor to score each flip from its local subtriangulation.

\begin{wraptable}[7]{r}{0.6\textwidth}
    \centering
\vspace{-0.8\baselineskip}
\centering
\begingroup
\captionsetup{font=small,skip=2pt,hypcap=false}
\captionof{table}{Controlled $4$D ablation at $500$ flips. Entries are average relative gap with standard error (\%, $\downarrow$).}
\label{tab:ablation-controlled}
\scriptsize
\setlength{\tabcolsep}{2pt}
\renewcommand{\arraystretch}{0.95}
\resizebox{\linewidth}{!}{%
\begin{tabular}{@{}lccc@{}}
\toprule
Variant & Min Simp. & Min Diameter & Min Weight \\
\midrule
\algoname{} (SNN) & \textbf{0.185{\tiny$\pm$0.183}} & \textbf{4.000{\tiny$\pm$1.265}} & \textbf{0.137{\tiny$\pm$0.052}} \\
w/o simplicial information (EGNN) & 8.977{\tiny$\pm$2.213} & 12.500{\tiny$\pm$1.531} & 3.112{\tiny$\pm$0.660} \\
w/o subtriangulation (Pooling + MLP) & 15.121{\tiny$\pm$2.780} & 14.167{\tiny$\pm$1.558} & 6.992{\tiny$\pm$0.655} \\
\bottomrule
\end{tabular}%
}
\endgroup
\end{wraptable} 

The EGNN variant removes simplicial information, and the Pooling + MLP variant removes subtriangulation structure altogether.
Table~\ref{tab:ablation-controlled} reports the average relative gap over $40$ evaluation polytopes for each objective.
The reference includes the $4$D \textsc{TOPCOM} enumeration value and every $500$-flip method in the main and ablation result files.
Both removals increase the relative gap across all three objectives, indicating the necessity of $d$-simplicial information when making a flip decision.

\subsection{Calabi-Yau Threefold Sampling}
\label{sec:exp4d}

\begin{figure*}[t]
\centering
\subcaptionbox{\label{fig:cy_success_rate}}[0.244\textwidth]{%
    \includegraphics[width=\linewidth]{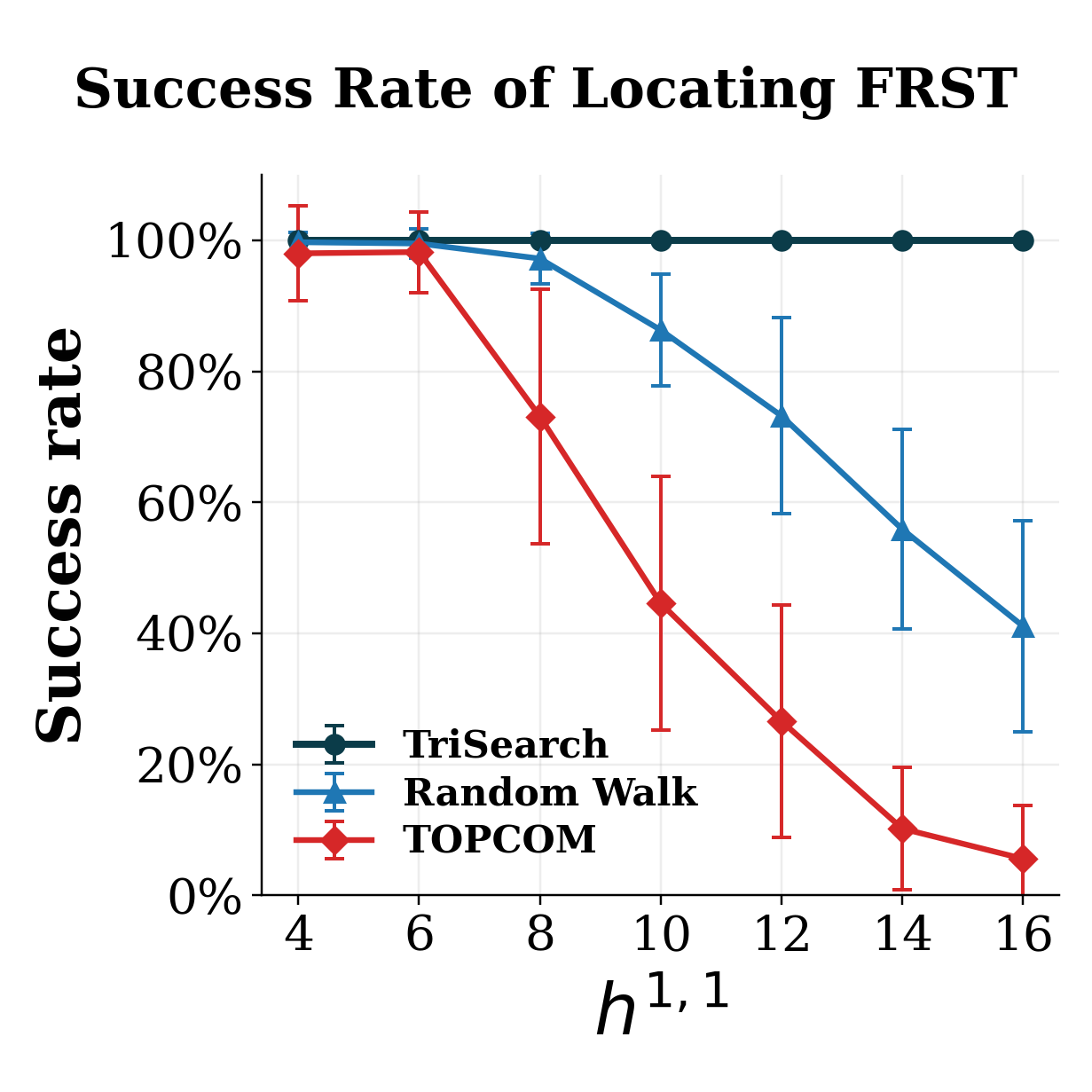}%
}
\subcaptionbox{\label{fig:frst_over_time}}[0.244\textwidth]{%
    \includegraphics[width=\linewidth]{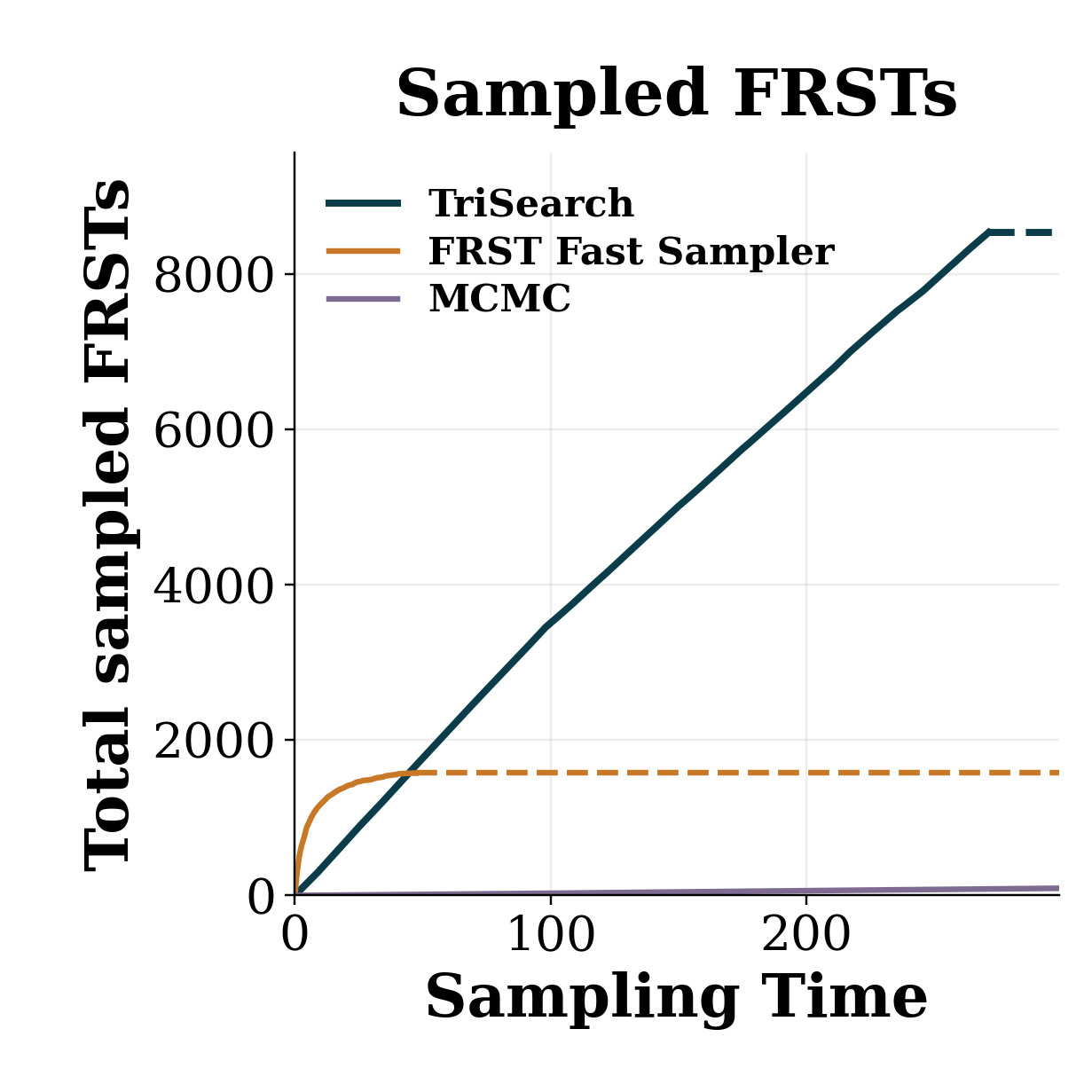}%
}
\subcaptionbox{\label{fig:cy_class_over_time}}[0.244\textwidth]{%
    \includegraphics[width=\linewidth]{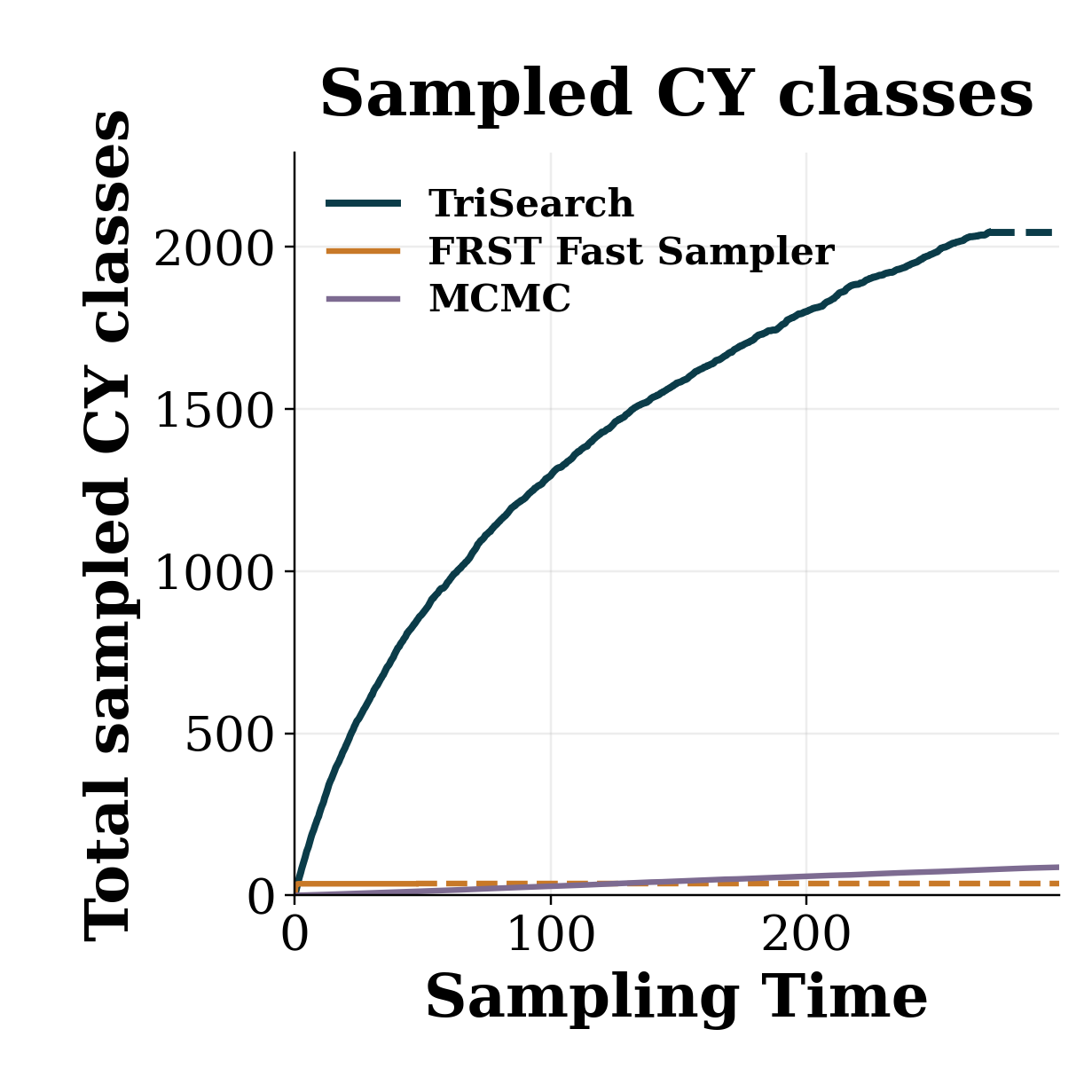}%
}
\subcaptionbox{\label{fig:cy_class_fraction}}[0.244\textwidth]{%
    \includegraphics[width=\linewidth]{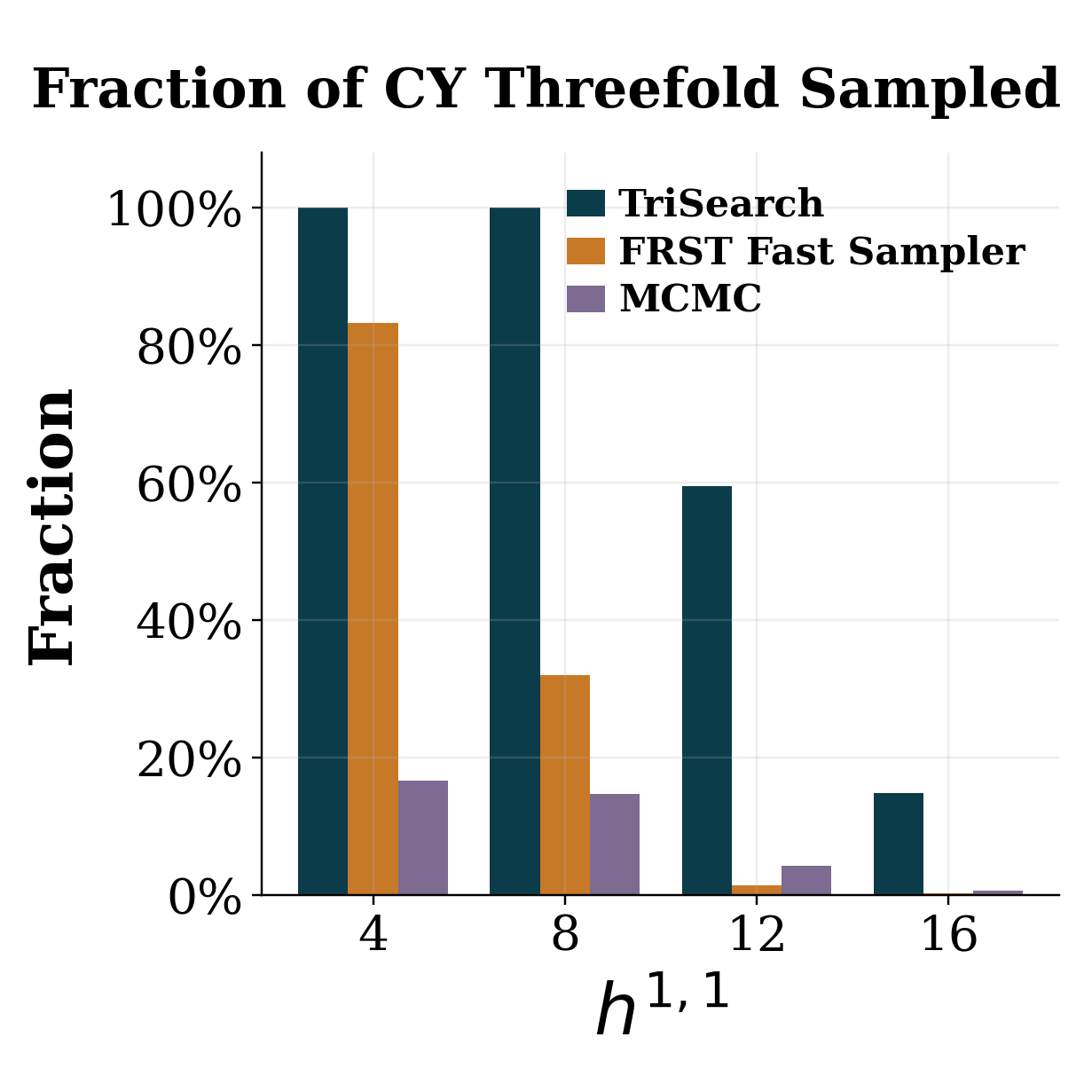}%
}
\caption{CY threefold sampling experiments.
(a) Success rate of locating a nearby FRST within a $50$-flip budget on unseen polytopes with $h^{1,1}\in\{4,6,\ldots,16\}$.
Error bars show standard deviations across $20$ polytopes.
(b, c) Cumulative numbers of unique FRSTs (b) and CY threefolds (c) discovered over sampling time at $h^{1,1}=16$. The dashed line represents that the stopping criterion was met. \algoname{} stopped at the maximum iteration, while \textsc{FRST Fast Sampler} was stopped by hitting max retries.
(d) Fraction of CY threefold classes discovered for $h^{1,1}\in\{4,8,12,16\}$. Labels give the discovered class count and the percentage.} \vspace{-1.0em}
\label{fig:cy_sampling}
\end{figure*}

We next evaluate the
framework on CY threefold sampling. 
At the geometry level, our task is to find Fine, Regular, Star triangulations (FRSTs) of 4D reflexive lattice polytopes, then use the Batyrev construction to map each FRST
to a CY threefold, where different FRSTs may yield topologically equivalent threefolds.\footnote{Recall that a triangulation is fine if it uses every lattice point, regular if it can be induced by a lifting, and star if all maximal simplices contain the origin.} 
The practical goal is therefore to discover as many distinct threefold classes as possible. 
We formulate this as a nearby-FRST discovery task from a random starting triangulation with sparse binary reward. 
We train the policy to start from a regular triangulation induced by random heights and may take at most $50$ flips to locate nearby target triangulations. 
It succeeds and receives a positive reward $+1$ when it reaches a fine regular triangulation, 
after which standard post-processing closes it into a star triangulation~\cite{demirtas2020bounding}. 
We train \algoname{} on small reflexive polytopes with Hodge number $h^{1,1}=3$, and evaluate on $1000$ random triangulations of unseen polytopes with $h^{1,1}$ from $4$ to $16$. 
Figure~\ref{fig:cy_success_rate} shows that \algoname{} maintains a perfect success rate across the range, while \textsc{TOPCOM} breadth-first search and \textsc{Random Walk} degrade quickly as $h^{1,1}$ grows. 

We then turn the trained FRST locator into a CY sampler. 
In each iteration, we draw a random height vector in $\mathbb{R}^{|\vpoly(\poly)|}$ from a diagonal Gaussian distribution, construct the induced regular triangulation through lifting, and run \algoname{} to search for a nearby FRST. 
Sampling efficiency is evaluated on $10$ polytopes each for $h^{1,1}\in\{4,8,12,16\}$. 
We compare against the widely used software \textsc{CYTools}~\cite{demirtas2022cytools, demirtas2020bounding}, including \textsc{FRST Fast Sampler} and \textsc{MCMC}, under the same budgeted regime. 
Each sampler is assigned at most $300$ seconds or $1024$ iterations per polytope, and sampling stops after $50$ consecutive retries without discovering a new FRST. 
Figures~\ref{fig:frst_over_time} and~\ref{fig:cy_class_over_time} show the sampling count at $h^{1,1}=16$.  \algoname{} exhibits nearly linear FRST growth and discovers significantly more distinct CY classes than baseline samplers. 
\textsc{MCMC} also continues to discover new FRSTs and CY classes, but suffers from low sampling efficiency, 
while \textsc{FRST Fast Sampler} initially achieves very high FRST sampling speed before rapidly decaying and hitting max retries. 
Figure~\ref{fig:cy_class_fraction} aggregates the final class fractions across $h^{1,1}\in\{4,8,12,16\}$. 
\algoname{} recovers all known classes at $h^{1,1}=4$ and $h^{1,1}=8$, and outperforms both \textsc{CYTools} samplers at $h^{1,1}=12$ and $h^{1,1}=16$.

\section{Conclusion}
We introduced \algoname, a reinforcement-learning framework for optimizing and exploring triangulations of polytopes in any fixed dimension, where the key design element is a circuit-supported subtriangulation action representation: geometric routines enumerate valid bistellar flips, and the policy learns how to rank them using both global triangulation context and the realized local subtriangulation. TriSearch outperforms all the baselines under the same budget across $3$D and $4$D triangulation optimization, and outperforms the widely used samplers in \textsc{CYTools} across $4$D CY threefold sampling task.
These results support learned flip-graph navigation as an effective approach to combinatorial geometry problems whose search spaces are too large to enumerate, yet whose local moves are mathematically well formulated.
Similar high-level structure also appears in phylogenetic tree search \cite{liptak2021constructing}, rhombus tiling \cite{chavanon2006rhombus}, and unknotting problems \cite{shehper2024makes}, where the feasible space is too large to enumerate but local moves expose rich mathematical structure. 

Despite the strong empirical performance of \algoname{}, our current work has several limitations: First, the method relies on a geometric routine to enumerate flippable circuits, which is currently difficult to parallelize; future work could investigate replacing this component with a neural network. 
Second, the method does not currently address situations in which the flip graph is disconnected. 
Third, in the CY sampling application, the random starting triangulations are sampled uniformly. Future work could instead learn the distribution of FRSTs in height space and sample more strategically.

\section*{Acknowledgments} 

This project has been supported by 
NSF grant DMS-2522495. 
GM was partially supported by 
DARPA AIQ grant HR00112520014, 
NSF grants 
DMS-2145630, 
CCF-2212520, 
DFG SPP 2298 grant 464109215, 
and BMFTR in DAAD project 57616814 (SECAI).

\newpage 
\bibliographystyle{unsrt}
\bibliography{references}

\clearpage
\appendix

\section{Radon Partitions and Circuit Flips}
\label{app:radon-circuits}

This appendix records the standard construction behind the two local triangulations used in Definition~\ref{def:flip}.
Let $\poly$ have vertex set $\vpoly$ and let $\circuit \subset \vpoly$ be a circuit.
When forming links and joins, we identify a simplex with its vertex set.
Since $\circuit$ is minimally affinely dependent, there is an affine dependence relation, unique up to nonzero scaling,
\[
\sum_{p_i \in \circuit} \lambda_i p_i = 0,
\qquad
\sum_{p_i \in \circuit} \lambda_i = 0,
\]
where all coefficients are nonzero and both signs occur.
The signs define the Radon partition
\[
\circuit^+ = \{p_i \in \circuit : \lambda_i > 0\},
\qquad
\circuit^- = \{p_i \in \circuit : \lambda_i < 0\},
\]
and the two parts have intersecting convex hulls:
\[
\conv{\circuit^+} \cap \conv{\circuit^-} \neq \emptyset.
\]
Let $r=\dim \conv{\circuit}$.
The circuit therefore admits two sets of maximal $r$-simplices in $\conv{\circuit}$:
\begin{align*}
\tri_{\mathrm{core}}^+(\circuit) &:= \{\circuit \setminus \{p\}: p \in \circuit^+\}, \\
\tri_{\mathrm{core}}^-(\circuit) &:= \{\circuit \setminus \{p\}: p \in \circuit^-\}.
\end{align*}
We write these cores by their maximal faces, following the convention in the main text.
Whenever lower-dimensional faces are needed, we pass to the corresponding simplicial closure.
Changing the sign of the affine dependence swaps the $+$ and $-$ labels.

When $r=d$, the core triangulations are full-dimensional.
This is the general-position case.
We set $L_{\tri}(\circuit)=\{\emptyset\}$, and the bistellar flip directly replaces $\tri_{\mathrm{core}}^+(\circuit)$ by $\tri_{\mathrm{core}}^-(\circuit)$.

When $r<d$, the circuit core must be embedded into the surrounding triangulation through a link.
Since $\tri$ is written by its maximal simplices, let $\overline{\tri}$ denote its simplicial closure:
\[
\overline{\tri}
=
\{F\subseteq \vpoly(\poly):F\subseteq \vpoly(\simplex) \text{ for some } \simplex\in\tri\}.
\]
For a face $F\in\overline{\tri}$, its link in $\tri$ is
\[
\Lk_{\tri}(F)=\{G\in\overline{\tri}:G\cap F=\emptyset,\ F\cup G\in\overline{\tri}\}.
\]
The link lives in $\overline{\tri}$ and uses vertices outside $F$.

For a flippable circuit, orient the signs so that every maximal face in $\tri_{\mathrm{core}}^+(\circuit)$ lies in $\overline{\tri}$.
The common link is
\[
L_{\tri}(\circuit)
=
\bigcap_{p\in \circuit^+}\Lk_{\tri}(\circuit\setminus\{p\}).
\]
Equivalently, every maximal face $\circuit\setminus\{p\}$ with $p\in \circuit^+$ has the same link $L_{\tri}(\circuit)$.
This link lives on vertices outside $\circuit$ and has maximal faces of dimension $d-r-1$.
Joining it with the $r$-dimensional circuit core gives $d$-simplices.
For two collections of faces $K$ and $L$ on disjoint vertices, their join is
\[
K*L=\{F\cup G:F\in K,\ G\in L\}.
\]
When we write $\tri_{\mathrm{core}}^\pm(\circuit)*L_{\tri}(\circuit)$ as part of $\tri$, we mean the set of maximal $d$-simplices obtained by taking $F\in\tri_{\mathrm{core}}^\pm(\circuit)$ and $G$ a maximal face of $L_{\tri}(\circuit)$.
The notation in the main text suppresses the link by writing the full-dimensional local subtriangulations as
\[
\tri^+(\circuit)
=
\tri_{\mathrm{core}}^+(\circuit)*L_{\tri}(\circuit),
\qquad
\tri^-(\circuit)
=
\tri_{\mathrm{core}}^-(\circuit)*L_{\tri}(\circuit).
\]
The full bistellar flip supported by $\circuit$ replaces the first joined complex by the second.
At the level of maximal $d$-simplices, it produces
\[
\tri'=\left(\tri\setminus\tri^+(\circuit)\right)\cup\tri^-(\circuit).
\]
This recovers the direct replacement in Definition~\ref{def:flip}.

\begin{proposition}[Uniqueness of the flip supported by a fixed circuit]
Let $\tri$ be a geometric triangulation of $\poly$ with point configuration $\vpoly(\poly)$.
Let $\circuit\subset \vpoly(\poly)$ be a circuit.
If $\tri_2$ and $\tri_3$ are both obtained from $\tri$ by bistellar flips supported on $\circuit$, then
\[
\tri_2=\tri_3.
\]
Equivalently, for fixed $\tri$ and fixed $\circuit$, there is at most one successor triangulation $\flip{\tri}{\circuit}$.
\end{proposition}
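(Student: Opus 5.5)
The plan is to show that a flip supported on $\circuit$ is completely determined by data read off from $\tri$ and $\circuit$ alone, with no remaining choices. A flip on $\circuit$ is specified by three pieces of data: an orientation of the Radon partition, i.e.\ which of $\tri_{\mathrm{core}}^{\pm}(\circuit)$ is realized; a link $L$; and the resulting formula
\[
\tri'=\left(\tri\setminus(\tri^+_{\mathrm{core}}(\circuit)*L)\right)\cup(\tri^-_{\mathrm{core}}(\circuit)*L).
\]
So it suffices to show that both the orientation and the link are forced by $\tri$. Then $\tri_2$ and $\tri_3$ arise from the same replacement and coincide.

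\emph{Step 1 (orientation).} Since the affine dependence on $\circuit$ is unique up to scaling, the Radon partition $\{\circuit^+,\circuit^-\}$ is canonical. The only ambiguity is which side is realized, so I will show that $\tri_{\mathrm{core}}^+(\circuit)$ and $\tri_{\mathrm{core}}^-(\circuit)$ cannot both lie in $\overline{\tri}$. The argument: $\conv{\circuit^+}\cap\conv{\circuit^-}\neq\emptyset$, and $\circuit^+$ is a face of every simplex of $\tri_{\mathrm{core}}^-(\circuit)$, namely $\circuit\setminus\{q\}\supseteq\circuit^+$ for $q\in\circuit^-$. Symmetrically, $\circuit^-$ is a face of the simplices in $\tri_{\mathrm{core}}^+(\circuit)$. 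If both cores were in $\overline{\tri}$, then $\conv{\circuit^+}$ and $\conv{\circuit^-}$ would be two faces of $\overline{\tri}$ with disjoint vertex sets. The intersection property of a triangulation says two such faces meet in a common face, which here must be empty. This contradicts the Radon intersection. Hence the realized side, and with it the labeling used in the flip, is determined by $\tri$.

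\emph{Step 2 (link).} With the orientation fixed, the link used in the flip is by definition $L_{\tri}(\circuit)=\bigcap_{p\in\circuit^+}\Lk_{\tri}(\circuit\setminus\{p\})$. This is a function of $\overline{\tri}$ and $\circuit$ only. To cover the case where a flip is defined abstractly as ``some $L$ with $\tri^+_{\mathrm{core}}(\circuit)*L\subseteq\tri$ and all faces $\circuit\setminus\{p\}$ sharing link $L$,'' I will check that any such $L$ must equal $\Lk_{\tri}(\circuit\setminus\{p\})$ for each $p\in\circuit^+$. The reason is that flippability requires the link to coincide with the full link of each core face; otherwise the replacement would not produce a triangulation. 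So $L$ equals $L_{\tri}(\circuit)$ and is unique.

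\emph{Step 3.} Substituting the unique orientation and link into the displayed formula gives the same maximal simplex set for $\tri_2$ and $\tri_3$.

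The main obstacle is Step 2 when the flip is described by a join condition rather than by the link formula. There one has to argue that a proper sublink cannot serve: if $\tri^+(\circuit)$ used only part of $\Lk_{\tri}(\circuit\setminus\{p\})$, the remaining simplices containing $\circuit\setminus\{p\}$ would overlap the new simplices of $\tri^-_{\mathrm{core}}(\circuit)*L$ near $\conv{\circuit^+}\cap\conv{\circuit^-}$, violating the intersection property. I would handle this with a local volume/covering argument around a relative interior point of the Radon intersection.
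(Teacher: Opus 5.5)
Your proposal is correct and follows the paper's proof. Both arguments first rule out both cores lying in $\overline{\tri}$ via the Radon point and the intersection property, and then note that once the orientation is fixed the link $L_{\tri}(\circuit)$ is determined by $\tri$ and $\circuit$. Your Step 1 is a slightly cleaner variant: you use the disjoint faces $\circuit^+\subseteq\circuit\setminus\{q\}$ and $\circuit^-\subseteq\circuit\setminus\{p\}$, whose common face is empty, whereas the paper works with the facets $\circuit\setminus\{p\}$, $\circuit\setminus\{q\}$ and must additionally show that the Radon point avoids $\conv{\circuit\setminus\{p,q\}}$. The proper-sublink case you sketch in Step 2 is not needed, since the paper's flip uses $L_{\tri}(\circuit)$ by definition.
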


\begin{proof} 
Suppose $\tri_2$ and $\tri_3$ are two successors obtained by flips supported on $\circuit$.
Without loss of generality, orient the Radon partition so that the first flip removes the $+$ side of $\circuit$.
Thus there is a link $L_2$ such that, at the level of maximal $d$-simplices,
\[
\tri_2
=
\Bigl(\tri\setminus(\tri_{\mathrm{core}}^+(\circuit)*L_2)\Bigr)
\cup
\bigl(\tri_{\mathrm{core}}^-(\circuit)*L_2\bigr).
\]
We first show that the second flip must also remove the $+$ side.
Assume the opposite.
Then the second flip removes the $-$ side for some link $L_3$.
Because $\emptyset$ belongs to every link, this would imply that both $\tri_{\mathrm{core}}^+(\circuit)$ and $\tri_{\mathrm{core}}^-(\circuit)$ are contained in $\overline{\tri}$.
We show that this is impossible in a geometric triangulation.

Choose $p\in \circuit^+$ and $q\in \circuit^-$.
Then $\circuit\setminus\{p\}$ is a maximal face of $\tri_{\mathrm{core}}^+(\circuit)$, while $\circuit\setminus\{q\}$ is a maximal face of $\tri_{\mathrm{core}}^-(\circuit)$.
The affine dependence on the circuit gives a Radon point
\[
x\in \conv{\circuit^+}\cap\conv{\circuit^-}.
\]
Thus
\[
x\in \conv{\circuit\setminus\{p\}}
\cap
\conv{\circuit\setminus\{q\}}.
\]
However, $x\notin \conv{\circuit\setminus\{p,q\}}$.
Otherwise $x$ would have two affine representations in the affinely independent set $\circuit\setminus\{q\}$.
One uses $\circuit^+$ with a positive coefficient on $p$, and the other uses $\circuit\setminus\{p,q\}$ with coefficient $0$ on $p$.
This contradicts that every proper subset of a circuit is affinely independent.
Therefore the two simplices $\circuit\setminus\{p\}$ and $\circuit\setminus\{q\}$ do not meet in their common face.
They cannot both be faces of $\overline{\tri}$.
Thus the second flip cannot remove the $-$ side.

Therefore both flips remove the realized core $\tri_{\mathrm{core}}^+(\circuit)$. 
By the linked construction above, the link is then uniquely determined by $\tri$ and $\circuit$: 
\[
L_2
=
L_3
=
L_{\tri}(\circuit)
=
\bigcap_{p\in \circuit^+}\Lk_{\tri}(\circuit\setminus\{p\}).
\]
The removed joined complexes are identical, and the inserted joined complexes are identical.
Hence $\tri_2=\tri_3$. 
\end{proof}

\section{Architecture and Training Details}
\label{app:architecture}

\subsection{EGNN Encoder Details}

The encoder applies EGNN message passing to the graph $G(\tri)$ induced by the $1$-skeleton of the current triangulation.
For each vertex $p_i \in \vpoly(\poly)$, the initial coordinate and hidden feature are
\[
  \mathbf{x}_i^{(1)} := p_i,
  \qquad
  \mathbf{h}_i^{(1)} := W p_i .
\]
At layer $\ell$, the encoder computes an edge message for each neighbor $j \in \mathcal{N}(i)$:
\[
\mathbf{m}_{ij}
=
\phi_e\!\bigl(
\mathbf{h}_i^{(\ell)},
\mathbf{h}_j^{(\ell)},
\|\mathbf{x}_i^{(\ell)} - \mathbf{x}_j^{(\ell)}\|^2
\bigr).
\]
It then updates the coordinate and hidden feature by
\[
\mathbf{x}_i^{(\ell+1)}
=
\mathbf{x}_i^{(\ell)}
+
\frac{1}{|\mathcal{N}(i)|}
\sum_{j \in \mathcal{N}(i)}
\bigl(\mathbf{x}_i^{(\ell)} - \mathbf{x}_j^{(\ell)}\bigr)
\phi_x(\mathbf{m}_{ij}),
\]
\[
\mathbf{h}_i^{(\ell+1)}
=
\mathbf{h}_i^{(\ell)}
+
\phi_h\!\Bigl(
\mathbf{h}_i^{(\ell)},
\sum_{j \in \mathcal{N}(i)} \mathbf{m}_{ij}
\Bigr).
\]
Here $\phi_e$, $\phi_x$, and $\phi_h$ are MLPs.
After $L_{\mathrm{Enc}}$ layers, the encoder outputs vertex embeddings $\mathbf{h}^\star$ and updated coordinates $\mathbf{x}^\star$.

\subsection{Circuit-Supported Action Scoring}

The actor scores circuit-supported flips through their realized full-dimensional local subtriangulations.
It first lifts vertex features to maximal simplices.
For each $\simplex_a \in \tri$, it pools the encoder embeddings of the vertices in that simplex:
\[
  \mathbf{g}_{a}^{(0)}
  =
  \mathrm{MaxPool}\left(
    \{\mathbf{h}_i^\star : p_i \in \vpoly(\simplex_a)\}
  \right).
\]
Collecting these features gives $G^{(0)} \in \mathbb{R}^{M \times m}$, where each row corresponds to one maximal simplex in $\tri=\{\simplex_1,\ldots,\simplex_M\}$.
The actor then propagates information over maximal simplices rather than only over vertices.
This step gives each action score access to how the simplices in $\tri^+(Z)$ meet nearby simplices through shared facets.

Let $B_d$ be the oriented boundary matrix from $d$-simplices to $(d-1)$-faces.
The top-degree down Laplacian is
\[
  L_d = B_d^\top B_d .
\]
It connects two maximal simplices when they share a facet.
In the implementation, we normalize this sparse Laplacian by a global row-sum scale for numerical stability and denote the result by $\widetilde{L}_d$.
We use the simplicial propagation rule proposed by Ebli et al.~\cite{ebli2020simplicial}.
The Chebyshev recursion is defined by
\[
  T_0(G)=G,
  \qquad
  T_1(G)=\widetilde{L}_dG,
  \qquad
  T_k(G)=2\widetilde{L}_dT_{k-1}(G)-T_{k-2}(G)
  \quad k \geq 2.
\]
Each action-scoring propagation layer updates simplex features by
\[
  G^{(\ell+1)}
  =
  \sigma\left(
    \sum_{k=0}^{K_{\mathrm{act}}-1}
    T_k(G^{(\ell)})\Theta_k^{(\ell)}
    + \mathbf{b}^{(\ell)}
  \right),
\]
where the final layer omits $\sigma$.
For a circuit-supported flip $Z \in \flipspace{\tri}$, the actor pools features over the realized full-dimensional local subtriangulation $\tri^+(Z)$:
\[
  \mathbf{z}_Z =
  \mathrm{MaxPool}\left(
    \{\mathbf{g}_{a}^{(L_{\pi})} : \simplex_a \in \tri^+(Z)\}
  \right),
  \qquad
  s_{\actorparam}(\tri, Z) = W^\top \mathbf{z}_Z.
\]
The policy normalizes these logits across the current feasible action set $\flipspace{\tri}$.

\subsection{PPO Training Objective}
\label{app:training}

For each rollout transition, the training reward is $\tilde r_t = r_t + b_t$, where $b_t$ is the expansion bonus from Algorithm~\ref{alg:train}.
The PPO probability ratio is
\[
  \rho_t(\actorparam) :=
  \frac{\modelactor(Z_t \mid \tri_t)}
       {\pi_{\theta_{\mathrm{old}}}(Z_t \mid \tri_t)} .
\]
Generalized advantage estimation uses the temporal-difference residual
\[
  \delta_t =
  \tilde r_t + \gamma \modelvalue(\tri_{t+1}) - \modelvalue(\tri_t),
\]
with the bootstrap term omitted at the episode horizon.
The advantage estimate is
\[
  \hat{A}_t
  =
  \sum_{h=0}^{H-1-t} (\gamma \lambda)^h \delta_{t+h}.
\]
The clipped PPO policy loss is
\[
  \mathcal{L}_{\mathrm{policy}}(\actorparam, \encoderparam)
  =
  - \mathbb{E}_t \left[
  \min\!\left(
  \rho_t(\actorparam)\hat{A}_t,
  \mathrm{clip}(\rho_t(\actorparam), 1-\epsilon, 1+\epsilon)\hat{A}_t
  \right)
  \right].
\]
The value loss is
\[
  \mathcal{L}_{\mathrm{value}}(\valueparam, \encoderparam)
  =
  \mathbb{E}_t \left[
  \left(\modelvalue(\tri_t) - \hat{R}_t\right)^2
  \right],
\]
where $\hat{R}_t$ is the empirical return.
The negative entropy loss is
\[
  \mathcal{L}_{\mathrm{entropy}}(\actorparam, \encoderparam)
  =
  \mathbb{E}_t\left[
  \sum_{Z\in\flipspace{\tri_t}}
  \modelactor(Z\mid\tri_t)
  \log \modelactor(Z\mid\tri_t)
  \right].
\]
The parameters $(\actorparam,\valueparam,\encoderparam)$ are updated by minimizing
\[
  \mathcal{L}
  =
  \mathcal{L}_{\mathrm{policy}}(\actorparam, \encoderparam)
  +
  \alpha_{\mathrm{val}} \mathcal{L}_{\mathrm{value}}(\valueparam, \encoderparam)
  +
  \alpha_{\mathrm{ent}}\mathcal{L}_{\mathrm{entropy}}(\actorparam, \encoderparam),
\]
where $\alpha_{\mathrm{val}}$ is the value coefficient and $\alpha_{\mathrm{ent}}$ is the entropy coefficient, whose value is provided in Appendix~\ref{app:hyperparameters}

\subsection{Model and Training Hyperparameters}
\label{app:hyperparameters}

Table~\ref{tab:model-training-hparams} reports the architecture and PPO settings used in the triangulation optimization experiments. We largely follow the conventional hyperparameters used in~\cite{schulman2017proximal,satorras2021n,ebli2020simplicial}.

\begin{table*}[ht]
\centering
\caption{
  Model and PPO hyperparameters used for \algoname and \textsc{NLS}.
  \textsc{NLS} uses the same encoder and PPO setup as \algoname, but replaces direct feasible-flip scoring with pooling followed by an MLP acceptance head.
}
\label{tab:model-training-hparams}
\small
\setlength{\tabcolsep}{6pt}
\begin{tabular}{lll}
\toprule
Group & Setting & Value \\
\midrule
\multirow{6}{*}{Architecture}
 & EGNN encoder depth & $3$ message-passing layers \\
 & EGNN hidden size & $64$ \\
 & Action-scoring propagation depth & $2$ layers \\
 & Chebyshev order & $3$ \\
 & Value head & $3$-layer MLP \\
 & Actor head & Linear \\
 & Actor and value MLP width & $64$ \\
\midrule
\multirow{9}{*}{PPO}
 & Rollout length & $50$ flips \\
 & Parallel training states & $128$ for $3$D, $512$ for $4$D \\
 & PPO epochs per update & $1$ \\
 & PPO Iterations & $2000$ \\
 & Learning rate & $10^{-4}$ \\
 & Clip parameter & $0.1$ \\
 & Discount factor & $0.99$ \\
 & GAE parameter & $0.95$ \\
 & Value coefficient & $0.5$ \\
 & Entropy coefficient & $0.001$ \\
\midrule
\multirow{3}{*}{\textsc{NLS}}
 & Encoder & same EGNN encoder as \algoname \\
 & Policy head & pooling followed by a $3$-layer MLP of width $64$ \\
 & Training & same PPO settings as \algoname \\
\bottomrule
\end{tabular}
\end{table*}

\FloatBarrier

\section{Experimental Details}
\label{app:experimental-details}

\subsection{Synthetic Polytope Data}
\label{app:synthetic-data}

Algorithm~\ref{alg:datagen} gives the synthetic data-generation procedure used in the triangulation optimization experiments. The purpose of the isomorphism check is to avoid filling the train or test set with many coordinate realizations of the same combinatorial type.
For each accepted training polytope, we use \textsc{TOPCOM} to enumerate up to $2000$ triangulations. These triangulations form the seed set $\hat{V}_{\poly}$ used as initial states during training. If fewer than $2000$ triangulations are available, we keep all enumerated triangulations.
The evaluation dataset follows the same generation procedure as described in Algorithm~\ref{alg:datagen}

\begin{algorithm}[ht]
  \caption{$d$-Polytope Data Generation} \label{alg:datagen}
  \begin{algorithmic}
    \STATE {\bfseries Initialize:} number of sampled points $n$, dataset $\mathcal{D} \gets \emptyset$, target dataset size $K$
    \REPEAT
      \STATE Sample $v_1, v_2, \ldots, v_n \overset{\mathrm{i.i.d.}}{\sim} \mathcal{N}(0, I_d)$
      \STATE $\poly \gets \operatorname{ConvHull}(\{v_1, v_2, \ldots, v_n\})$
      \STATE Set $\operatorname{exist} \gets 0$
      \FOR{$\poly' \in \mathcal{D}$}
        \IF{$\poly'$ is combinatorially isomorphic to $\poly$}
          \STATE $\operatorname{exist} \gets 1$
        \ENDIF
      \ENDFOR
      \IF{$\operatorname{exist} = 0$}
        \STATE $\mathcal{D} \gets \mathcal{D} \cup \{\poly\}$
      \ENDIF
    \UNTIL{$|\mathcal{D}| = K$}
  \end{algorithmic}
\end{algorithm}

\subsection{Baseline Algorithm Details}
\label{app:baseline-details}

All search baselines operate on the same feasible flips enumerated by the geometric routines. 
\textsc{Greedy} chooses the flip with the largest immediate objective improvement. 
\textsc{DFS} performs depth-first search while prioritizing unvisited neighbors with better immediate objective values. 
Best First Search (\textsc{BeFS}) is the best-first frontier baseline reported in the optimization tables. 
\textsc{Simulated Annealing} samples flips according to a probabilistic rule based on immediate objective improvement.
\textsc{NLS} adapts the acceptance-only NeuroLS controller to triangulation optimization. The proposal distribution is the same as \textsc{Simulated Annealing}, while the learned controller decides whether to accept the proposed move. Its encoder architecture matches \algoname, and its policy head follows NeuroLS by pooling the encoded state and applying an MLP.

The expanded baseline definitions are given in the per-method paragraphs below. 
All search methods start from the same initial triangulations as \algoname{} and receive a $500$-flip budget. 
\textsc{TOPCOM} is used for exact enumeration in $3$D and as a 100M-enumeration external baseline in $4$D. 
The nearby-FRST baselines use the same initial regular triangulations and the same $50$-flip budget as \algoname.

\paragraph{\textsc{Greedy}.}
At each state $\tri$, \textsc{Greedy} enumerates the feasible flips $\flipspace{\tri}$.
It evaluates the resulting objective $f(\flip{\tri}{\circuit})$ for every $\circuit \in \flipspace{\tri}$ and applies the flip with the largest immediate decrease.
If every feasible flip strictly increases $f$, \textsc{Greedy} applies the least-bad flip and continues.
There is no memory and no backtracking, so the trajectory is determined entirely by the local objective landscape.

\paragraph{\textsc{DFS}.}
At each state, \textsc{DFS} expands the current triangulation by enumerating $\flipspace{\tri}$.
It sorts the resulting neighbors by their immediate objective value and pushes the unvisited ones onto a stack with the most promising on top.
The next move pops the top of the stack and applies the corresponding flip.
When a branch has no unvisited neighbor with an improving objective value, \textsc{DFS} backtracks via the stack and continues from the previous unfinished branch.

\paragraph{\textsc{BeFS}.}
At each state, \textsc{BeFS} maintains a global priority queue of all discovered triangulations, keyed by their objective value.
It pops the lowest-objective state and enumerates its feasible flips.
The unvisited neighbors are inserted into the queue with their values.
The next pop expands whichever discovered state currently has the lowest value, even if it is not adjacent to the most recent one.

\paragraph{\textsc{Simulated Annealing}.}
At each state, \textsc{Simulated Annealing} samples a feasible flip $\circuit \sim \mathrm{Uniform}(\flipspace{\tri})$.
It computes the objective change $\Delta = f(\flip{\tri}{\circuit}) - f(\tri)$ and accepts the flip with probability $\min(1, \exp(-\Delta / T_t))$.
The temperature $T_t$ decays over the budget, so non-improving moves are common early in the run and rare near the end.

\paragraph{\textsc{NLS}.}
At each state, \textsc{NLS} samples a candidate flip from the same proposal distribution as \textsc{Simulated Annealing}.
A learned policy then decides whether to accept it.
The shared EGNN encoder embeds the current triangulation, and the embeddings are pooled into a global state representation.
A $3$-layer MLP then outputs the acceptance probability.
The proposed flip is applied with this probability and otherwise rejected, in which case the state remains $\tri$. The implementation is adapted from the official repo : \hyperlink{repo}{https://github.com/jokofa/NeuroLS/tree/master}. 

\paragraph{\textsc{TOPCOM}.}
At each state, \textsc{TOPCOM} performs breadth-first traversal of the flip graph by enumerating the feasible flips and adding the resulting triangulations to the queue.
It tracks visited states to avoid cycles.
In $3$D, this enumerates the entire flip graph and gives the exact optimum used as the gap reference.
In $4$D, the search is capped at $100$M expansions and is reported only as an external sanity check.
In nearby-FRST discovery, \textsc{TOPCOM} runs the same BFS from the initial regular triangulation and stops as soon as an FRST is reached. The implementation is from \hyperlink{repo}{https://github.com/passagemath/upstream-topcom}. 

\paragraph{\textsc{Random Walk}.}
At each state, \textsc{Random Walk} enumerates the feasible flips and applies one chosen uniformly at random.
There is no objective awareness and no memory across steps, so the trajectory is determined entirely by the local geometry of the flip graph.

\paragraph{\textsc{FRST Fast Sampler}.}
At each iteration, \textsc{FRST Fast Sampler} (the \textsc{CYTools} \texttt{random\_triangulations\_fast\_generator}) draws a random height vector $h \in \mathbb{R}^{|\vpoly(\poly)|}$.
It constructs the regular triangulation induced by $h$ and asks \textsc{CYTools} whether the result is an FRST.
New iterations are drawn until the retry stopping rule fires after $50$ consecutive iterations without a new FRST. The implementation is directly from \hyperlink{repo}{https://cy.tools/}.

\paragraph{\textsc{MCMC}.}
At each iteration, \textsc{MCMC} (the \textsc{CYTools} \texttt{random\_triangulations\_fair\_generator}) advances a Markov chain over regular triangulations whose stationary distribution approximates the uniform distribution over FRSTs.
The chain proposes a perturbation of the current height vector and accepts it via the \textsc{CYTools} fairness rule.
Sampling stops under the same $50$-retry rule as \textsc{FRST Fast Sampler}. The implementation is directly from \hyperlink{repo}{https://cy.tools/}.

\textsc{NLS} is the feasible NLS Acceptance variant for this action space. The full NLS design includes policy heads for choosing local-search components that do not map directly to feasible bistellar flips in triangulation optimization. We therefore keep the \textsc{Simulated Annealing} proposal mechanism and learn only whether to accept the proposed move. This makes the comparison test learned acceptance against the direct flip-ranking policy used by \algoname.

For sparse-reward nearby-FRST discovery, \textsc{Greedy}, \textsc{DFS}, \textsc{BeFS}, and \textsc{Simulated Annealing} are not used. Their rules require one-step objective differences, while nearby-FRST discovery gives a binary success signal.

\subsection{Sparse-Reward FRST Details}
\label{app:frst-details}

For nearby-FRST discovery, \algoname~is trained on $200$ reflexive polytopes with $h^{1,1}=3$. Evaluation uses unseen reflexive polytopes with $h^{1,1}\in\{4,6,8,10,12,14,16\}$. For each Hodge number, we sample $20$ polytopes and $50$ random regular triangulations per polytope, giving $1000$ initial states. Each method receives a $50$-flip budget from each initial triangulation. The baselines are \textsc{Random Walk}, which samples uniformly from feasible flips, and \textsc{TOPCOM}, which performs breadth-first flip-graph traversal from the same initial state. Local-improvement baselines are omitted because the reward is binary.

For CY threefold sampling, each \algoname~iteration samples a random height vector in $\mathbb{R}^{|\vpoly(\poly)|}$, constructs the associated regular triangulation, and searches for a nearby FRST with the trained policy. We compare against the \textsc{CYTools} \texttt{random\_triangulations\_fast\_generator} and \texttt{random\_triangulations\_fair\_generator}, reported as \textsc{FRST Fast Sampler} and \textsc{MCMC}. Sampling is evaluated on $10$ reflexive polytopes for each $h^{1,1}\in\{4,8,12,16\}$ and stops for a polytope after $50$ consecutive retries without finding a new FRST. Figure~\ref{fig:cy_class_fraction} reports the final fraction of known CY threefold classes discovered for those evaluated polytopes; the numerator is the number discovered by the method and the denominator is the known total for that setting.

\FloatBarrier

\section{Complementary Optimization Results}
\label{app:additional-experiments}

Tables~\ref{tab:results-3d} and~\ref{tab:results-4d} report the numerical values for the $500$-flip optimization comparison. 
Each entry is the mean relative gap with standard error. 
The average row aggregates per-instance gaps over all listed objectives and evaluation sizes.
Lower is better.

\begin{table*}[ht]
\centering
\caption{
  Relative gap (\%, $\downarrow$) after $500$ bistellar flips on unseen simplicial polytopes in $3$D.
  The reference is the exact optimum found by exhaustive enumeration.
  Bold and underlined entries mark the best and second-best $500$-flip search methods.
}
\label{tab:results-3d}
\scriptsize
\setlength{\tabcolsep}{3pt}
\resizebox{\textwidth}{!}{%
\begin{tabular}{ccl cccccc}
\toprule
\multicolumn{3}{c}{Evaluation Setup} & \multicolumn{6}{c}{Relative Gap @ 500 (\%, $\downarrow$)} \\
\cmidrule(lr){1-3}
\cmidrule(lr){4-9}
$d$ & $|\vpoly|$ & Objective & \textsc{Greedy} & \textsc{DFS} & \textsc{BeFS} & \textsc{SA} & \textsc{NLS} & \textbf{Ours} \\
\midrule
\multirow{6}{*}{3}
 & \multirow{3}{*}{13}
   & Min Simplices & 23.11{\tiny$\pm$2.69} & 6.41{\tiny$\pm$0.95} & \underline{2.14{\tiny$\pm$0.62}} & 6.64{\tiny$\pm$0.86} & 13.00{\tiny$\pm$2.35} & \textbf{0.13{\tiny$\pm$0.13}} \\
 & & Min Diameter & 14.30{\tiny$\pm$2.20} & 11.50{\tiny$\pm$2.12} & 16.70{\tiny$\pm$2.19} & 12.30{\tiny$\pm$2.11} & \underline{3.90{\tiny$\pm$1.27}} & \textbf{0.50{\tiny$\pm$0.49}} \\
 & & Min Weight & 10.56{\tiny$\pm$0.85} & 7.18{\tiny$\pm$0.69} & \underline{3.54{\tiny$\pm$0.58}} & 11.42{\tiny$\pm$0.92} & 8.12{\tiny$\pm$0.99} & \textbf{0.13{\tiny$\pm$0.05}} \\
\cmidrule(lr){2-9}
 & \multirow{3}{*}{14}
   & Min Simplices & 23.51{\tiny$\pm$2.50} & 12.07{\tiny$\pm$2.05} & \underline{6.17{\tiny$\pm$1.30}} & 12.60{\tiny$\pm$1.51} & 14.25{\tiny$\pm$2.18} & \textbf{0.00{\tiny$\pm$0.00}} \\
 & & Min Diameter & 15.40{\tiny$\pm$1.82} & 16.50{\tiny$\pm$1.10} & 20.60{\tiny$\pm$1.71} & 18.20{\tiny$\pm$1.12} & \underline{6.10{\tiny$\pm$1.32}} & \textbf{0.00{\tiny$\pm$0.00}} \\
 & & Min Weight & 11.78{\tiny$\pm$0.89} & 8.87{\tiny$\pm$0.79} & \underline{6.21{\tiny$\pm$0.76}} & 11.87{\tiny$\pm$0.98} & 8.43{\tiny$\pm$0.92} & \textbf{0.19{\tiny$\pm$0.08}} \\
\midrule
\multicolumn{3}{c}{Average} & 16.44{\tiny$\pm$0.85} & 10.42{\tiny$\pm$0.61} & 9.23{\tiny$\pm$0.67} & 12.17{\tiny$\pm$0.57} & \underline{8.97{\tiny$\pm$0.69}} & \textbf{0.16{\tiny$\pm$0.09}} \\
\bottomrule
\end{tabular}%
}
\end{table*}

\begin{table*}[ht]
\centering
\caption{
  Relative gap (\%, $\downarrow$) after $500$ bistellar flips on unseen simplicial polytopes in $4$D.
  The \textsc{TOPCOM} result is obtained by $10^9$ (100M) enumeration, and the reference is the best value found between this value and the $500$-flip search methods.
  Bold and underlined entries mark the best and second-best $500$-flip search methods.
}
\label{tab:results-4d}
\scriptsize
\setlength{\tabcolsep}{3pt}
\resizebox{\textwidth}{!}{%
\begin{tabular}{ccl ccccccc}
\toprule
\multicolumn{3}{c}{Evaluation Setup} & \multicolumn{7}{c}{Relative Gap @ 500 (\%, $\downarrow$)} \\
\cmidrule(lr){1-3}
\cmidrule(lr){4-10}
$d$ & $|\vpoly|$ & Objective & 100M \textsc{TOPCOM} & \textsc{Greedy} & \textsc{DFS} & \textsc{BeFS} & \textsc{SA} & \textsc{NLS} & \textbf{Ours} \\
\midrule
\multirow{6}{*}{4}
 & \multirow{3}{*}{13}
   & Min Simplices & 2.41{\tiny$\pm$0.94} & 17.25{\tiny$\pm$4.68} & 20.01{\tiny$\pm$6.30} & 19.24{\tiny$\pm$5.17} & \underline{6.93{\tiny$\pm$2.14}} & 11.29{\tiny$\pm$3.12} & \textbf{0.00{\tiny$\pm$0.00}} \\
 & & Min Diameter & 1.00{\tiny$\pm$0.97} & 20.00{\tiny$\pm$0.00} & 20.00{\tiny$\pm$0.00} & 20.00{\tiny$\pm$0.00} & 19.00{\tiny$\pm$0.97} & \underline{13.00{\tiny$\pm$2.13}} & \textbf{3.00{\tiny$\pm$1.60}} \\
 & & Min Weight & 3.18{\tiny$\pm$0.61} & 4.98{\tiny$\pm$1.24} & 5.44{\tiny$\pm$1.47} & 4.49{\tiny$\pm$1.20} & 3.10{\tiny$\pm$0.82} & \underline{2.50{\tiny$\pm$0.70}} & \textbf{0.13{\tiny$\pm$0.07}} \\
\cmidrule(lr){2-10}
 & \multirow{3}{*}{14}
   & Min Simplices & 14.50{\tiny$\pm$2.60} & 55.14{\tiny$\pm$7.56} & 22.74{\tiny$\pm$4.62} & 22.55{\tiny$\pm$4.92} & 14.24{\tiny$\pm$3.32} & \underline{10.89{\tiny$\pm$2.79}} & \textbf{0.37{\tiny$\pm$0.36}} \\
 & & Min Diameter & 4.00{\tiny$\pm$1.79} & 28.67{\tiny$\pm$2.80} & 23.00{\tiny$\pm$3.02} & 20.33{\tiny$\pm$3.02} & 16.67{\tiny$\pm$2.92} & \underline{14.67{\tiny$\pm$2.37}} & \textbf{5.00{\tiny$\pm$1.94}} \\
 & & Min Weight & 4.49{\tiny$\pm$0.60} & 13.10{\tiny$\pm$1.67} & 5.42{\tiny$\pm$1.04} & 5.25{\tiny$\pm$1.04} & \underline{2.10{\tiny$\pm$0.64}} & 2.22{\tiny$\pm$0.60} & \textbf{0.10{\tiny$\pm$0.07}} \\
\midrule
\multicolumn{3}{c}{Average} & 4.93{\tiny$\pm$0.72} & 23.19{\tiny$\pm$2.16} & 16.10{\tiny$\pm$1.59} & 15.31{\tiny$\pm$1.48} & 10.34{\tiny$\pm$1.04} & \underline{9.09{\tiny$\pm$1.00}} & \textbf{1.43{\tiny$\pm$0.46}} \\
\bottomrule
\end{tabular}%
}
\end{table*}
\vfill

\end{document}